\documentclass{article}

\PassOptionsToPackage{sort,numbers}{natbib}
\usepackage[final]{nips_2018}

\usepackage{amsmath, amssymb, amsthm}
\usepackage{graphicx}
\usepackage{times}
\usepackage{subfigure} 
\usepackage{array}
\usepackage{caption} 
\captionsetup[table]{skip=5pt}
\usepackage{tabularx}
\usepackage{cellspace}
\setlength\cellspacetoplimit{3pt}
\setlength\cellspacebottomlimit{3pt}
\newcolumntype{Y}{>{\tiny\raggedleft\arraybackslash}X}
\usepackage{algorithm, algorithmic}
\usepackage{color}\definecolor{darkblue}{rgb}{0,0,0.75}
\usepackage[pdftex,pdftitle={Corresponding Projections for Orphan Screening},colorlinks=true, bookmarksnumbered, citecolor=darkblue, urlcolor=darkblue, linkcolor=darkblue, 
 pdfpagemode=UseNone]{hyperref}
\usepackage[utf8]{inputenc}
\usepackage{xfrac}
\usepackage{wrapfig}
\usepackage{authblk}

\newtheorem{thm}{Theorem}

\newtheorem{lm}[thm]{Lemma}

\newtheorem{defin}[thm]{Definition}

\newcommand{\defemph}[1]{\emph{#1}}

\newcommand{\ueq}[1][]{%
	\if\relax\detokenize{#1}\relax
	\sbox0{$\underbrace{=}_{}$}%
	\mathrel{\mathmakebox[\wd0]{=}}
	\else
	\mathrel{\underbrace{=}_{\mathclap{#1}}}
	\fi}

\title{Corresponding Projections for Orphan Screening}

\author[1,2]{Sven Giesselbach}
\author[3]{Katrin Ullrich}
\author[2,3]{Michael Kamp}
\author[1,2]{Daniel Paurat}
\author[4]{Thomas G\"artner}


\affil[1]{Fraunhofer IAIS}
\affil[2]{Competence Center Machine Learning Rhine-Ruhr}
\affil[3]{University of Bonn}
\affil[4]{University of Nottingham}
\affil[ ]{\texttt{\{sven.giesselbach,daniel.paurat\}@iais.fraunhofer.de}}
\affil[ ]{\texttt{\{ullrich@iai,kamp@cs\}.uni-bonn.de}}
\affil[ ]{\texttt{thomas.gaertner@notthingham.ac.uk}}

\begin{document}

\maketitle
\begin{abstract}
We propose a novel transfer learning approach for orphan screening called corresponding projections. 
In orphan screening the learning task is to predict the binding affinities of compounds to an orphan protein, i.e., one for which no training data is available. 
The identification of compounds with high affinity is a central concern in medicine since it can be used for drug discovery and design.
Given a set of prediction models for proteins with labelled training data and a similarity between the proteins, corresponding projections constructs a model for the orphan protein from them such that the similarity between models resembles the one between proteins.
Under the assumption that the similarity resemblance holds, we derive an efficient algorithm for kernel methods. We empirically show that the approach outperforms the state-of-the-art in orphan screening. 
\end{abstract}

\section{Introduction}
\label{sec:intro}

This paper proposes an approach to predicting binding affinities of compounds to a protein without training data.
In biological organisms the bindings of small compounds to proteins induce subsequent cellular reactions. The strength of this binding is expressed via a real-valued \textit{affinity}. 
In the context of affinity prediction, we refer to compounds as \defemph{ligands}. 
Protein-ligand-complexes regulate a variety of biochemical processes, e.g., the effectiveness of transporters, ion channels, hormones, receptors, and enzymes. To know whether or how strong a ligand binds to a protein is crucial for \textit{drug discovery} and \textit{design}. The identification of ligands with high affinity for new drug substances is a central concern in medicine~ \citep{JacHoffStoVert08}. Via high-throughput screening (HTS) machinery in laboratories one is able to determine ligand affinities practically. In order to supplement this time-consuming and cost-intensive procedure, molecular databases can be screened with computational methods~\cite{AinAleksRoessBall15, LiWangMer11, ZhouSkol12, GepVogtBaj10}, denoted \textit{in-silico virtual screening}~\citep{Shoichet04}.

Ligands can be represented by \textit{molecular fingerprints}~\citep{BenJenScheiSukGliDav09}, i.e., vectorial representations that comprise their structural or physico-chemical information. For a protein with a training set of ligands and their binding affinities, this allows to train a prediction model using similarity search \cite{GepHumStuGaeBaj09} and various machine learning approaches, e.g., random forests or neural networks \cite{JimSkaMartRosFab18, BockGough05, LoRenTorAlt18, KunPaulBan18}. The most prominent and successful methods so far are \textit{support vector machines (SVM)} using molecular fingerprints \cite{GepHumStuGaeBaj09, Sugaya14, MaunzHelma08, UllMaWel16,ullrich2017co}.

This paper tackles an even more challenging task called \textit{orphan screening}~\citep{WassGepBaj09}. It describes the prediction of ligand affinities for proteins without known ligand affinity values (\textit{orphan targets}), such as 
\begin{wrapfigure}{r}{0.41\textwidth}
	\begin{center}
		\includegraphics[width=0.41\textwidth]{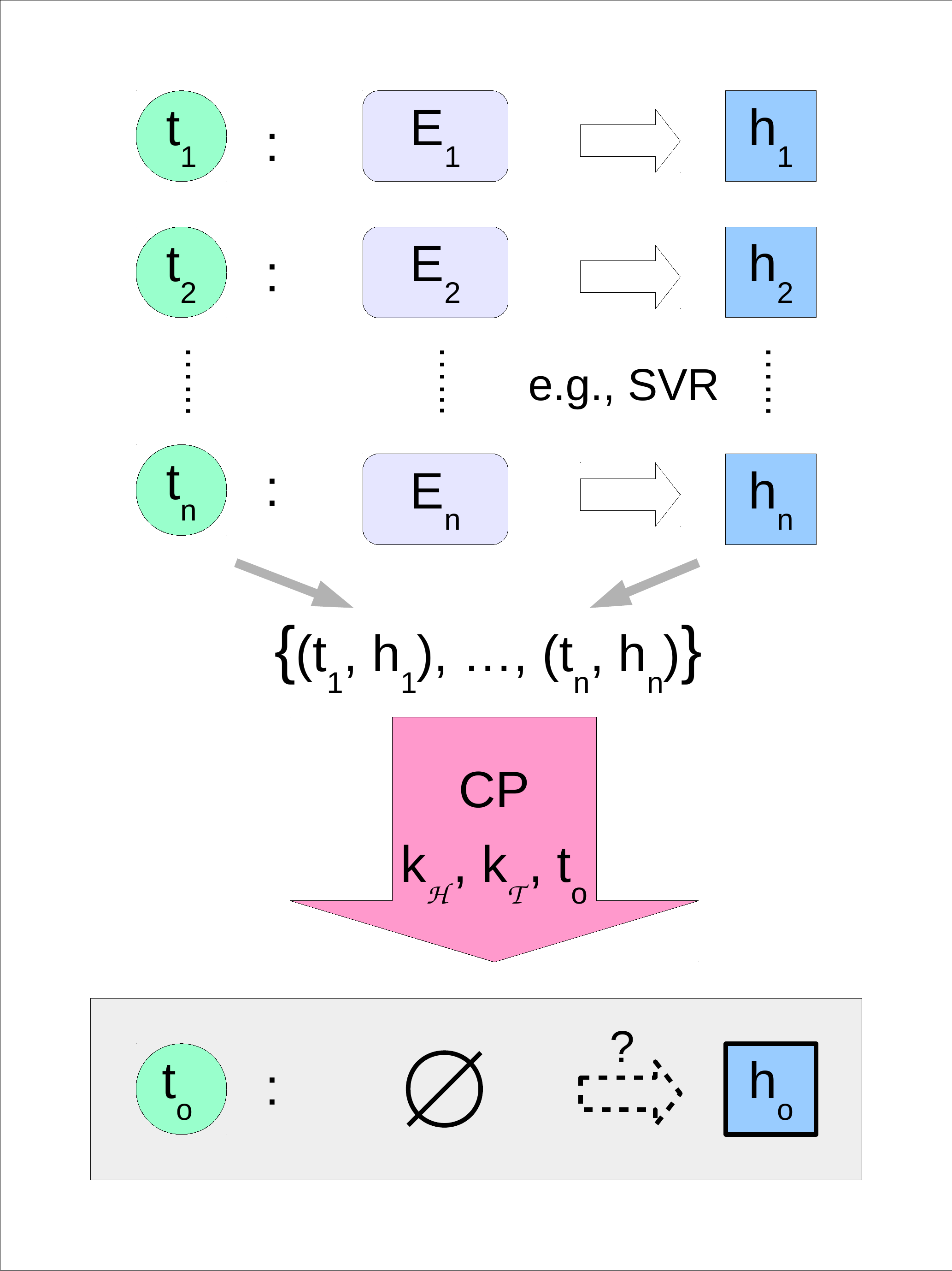}
	\end{center}
	\caption{Overview of the corresponding projections approach (CP). The goal is to find a hypothesis $h_o$ for the orphan target $t_o$ using a set of supervised targets $t_1,\dots,t_n$ with labelled training sets $E_1,\dots,E_n$. For each $t_i$ with $i\in [n]$, a model $h_i$ is trained using $E_i$, e.g., by SVR. The CP algorithm uses these proteins and models, as well as a similarity measure $k_\mathcal{T}$ between targets and a similarity measure $k_\mathcal{H}$ between models to construct $h_o$.}
	\label{overview_cp}
\vspace{-20px}
\end{wrapfigure}
 \textit{G-protein coupled receptors} (GPCRs) which are popular drug targets with few or no identified ligands \cite{JacHoffStoVert08, ErhLHeureux06}. Note that in previous work on orphan screening (see, e.g., \citep{ErhLHeureux06, JacHoffStoVert08, NingRangKar09, WassGepBaj09, GepHumStuGaeBaj09, BockGough05}), only those compounds with an affinity above a predifined threshold are called ligands and the task is to classifiy a compound as ligand or non-ligand. In the present work, we consider the regression case, i.e., the direct prediction of the binding affinities. Thus, for simplicity we refer to every compound as ligand.


The state-of-the art in orphan screening is the application of support vector machines with \textit{target-ligand-kernels} (TLK)~\citep{JacobVert08}.
The TLK is a tensor product of a target kernel and a ligand kernel. Thus, it serves as similarity measure for target-ligand-pairs and allows for a simultaneous screening of proteins and ligands in \textit{chemogenomics}~\citep{JacHoffStoVert08}. 

We present a novel transfer learning approach~\citep{PanYang10} called \textit{corresponding projections} (CP) for orphan screening. The idea behind CP is to relate the projections of proteins to the projections of the corresponding prediction models. Their relationship is used to generate a model for the orphan target.
Just like the TLK approach, the CP requires that labelled training information for other proteins (\textit{supervised targets}) is available for which a prediction model can be trained.
Subsequently, in the actual CP optimisation step a prediction model is assigned to the orphan target such that its relationship to the given models resembles the relationship of the orphan protein to the given ones (see Fig.~\ref{overview_cp}). 
The following section defines CP for affinity prediction using \textit{support vector regression} (SVR) as supervised training method.

In Sec.~\ref{sec:experiments} the CP approach is empirically evaluated on an orphan screening task and compared to TLK and other baselines, including a simplified variant of CP, i.e., a target similarity-weighted sum of supervised models introduced in~\citep{GepHumStuGaeBaj09}. 
Sec.~\ref{sec:conclusion} concludes the paper. The solutions of CP variants with proof and further practical result can be found in the appendix.


\section{Corresponding Projections}
Given a set of $n\in\mathbb{N}$ target proteins $t_1,\dots,t_n\in\mathcal{T}$ with corresponding training sets $E_1,\dots,E_n$, each consisting of labelled examples 
\[E_i=\{(x_1,y_1),\dots,(x_m,y_m)\}\subset\mathcal{X}\times\mathcal{Y}
\]
with $i\in [n]$, a model ${h_i : \mathcal{X}\mapsto\mathcal{Y} \in\mathcal{H}}$ can be trained for each target, e.g., via SVR. Thus, we obtain pairs $(t_1, h_1), \ldots, (t_n, h_n) \in \mathcal{T} \times \mathcal{H}$. Furthermore, let $t_o\in\mathcal{T}$ denote the orphan protein for which we want to infer a model $h_o\in\mathcal{H}$.

For that, let $\langle,\rangle_{\mathcal{T}}$ and $\langle,\rangle_{\mathcal{H}}$ be inner products with associated norms for targets $t \in \mathcal{T}$ and models $h \in \mathcal{H}$. 
We assume that the projection of proteins resembles the projection of models, i.e.,
\begin{align}
  \frac{\langle t_i,t_o \rangle_{\mathcal{T}}}{\|t_i\|_{\mathcal{T}}} \approx \frac{\langle f(t_i),f(t_o) \rangle_{\mathcal{H}}}{\| f(t_i) \|_{\mathcal{H}}} \label{sims_equality}
\end{align}
for every supervised target $t_i$. With this, finding $h_o$ can be solved by a least squares approach.
\begin{defin}\label{def_cp}
The model for the orphan protein $h_o$ is given by $h_o = \operatorname{argmin}_{h \in \mathcal{H}} \mathcal{Q}_o(h)$, where
\begin{align}
\mathcal{Q}_o(h) =\nu \|h\|^2_{\mathcal{H}} + \sum\limits_{i=1}^n \left| \langle h, h_i \rangle_{\mathcal{H}} \|t_i\|_{\mathcal{T}} - \langle t_o, t_i \rangle_{\mathcal{T}} \|h_i\|_{\mathcal{H}}\right|^2, \label{def_corr_proj}
\end{align}
with trade-off parameter $\nu \geq 0$. The optimisation in (\ref{def_corr_proj}) is called \textit{corresponding projections} (CP).
\end{defin}

If $\mathcal{H}$ is a Hilbert space, the model $h_o$ lies in the span of the supervised hypotheses $h_1, \ldots, h_n$, i.e.,
$h_o = \sum_{i=1}^n \beta_{oi} h_i$ for appropriate $\beta_o \in \mathbb{R}^n$. This parametrisation of $h_o$ leads us to a solution of CP.
\begin{lm} \label{lemma_solution_cp}
Let $\mathcal{H}$ be a Hilbert space and $k_{\mathcal{T}}$ a similarity measure on the target space $\mathcal{T}$. CP in (\ref{def_corr_proj}) can be solved via
\begin{align}
\beta_o = [\nu G + GNG]^\dagger G \rho_o, \nonumber
\end{align}
where $[M]^\dagger$ denotes the \textit{Moore-Penrose} inverse of matrix $M$ and 
\begin{equation*}
\begin{split}
N =& \operatorname{diag}(\{k_{\mathcal{T}}(t_i,t_i)\}_{i=1}^n)\\
G =& \{\langle h_i, h_j \rangle_{\mathcal{H}} \}_{i,j=1}^n\\
\rho_o =& \{\sqrt{k_{\mathcal{T}}(t_i,t_i)} ~k_{\mathcal{T}}(t_o,t_i)\|h_i\|_{\mathcal{H}} \}_{i=1}^n\enspace .
\end{split}
\end{equation*}
\end{lm}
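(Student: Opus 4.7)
The plan is to reduce the infinite-dimensional minimisation over $\mathcal{H}$ to a finite-dimensional quadratic programme over $\beta_o\in\mathbb{R}^n$, differentiate, and solve the resulting normal equations.

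First I would justify the parametrisation stated in the paper. Decompose any $h\in\mathcal{H}$ as $h=h_\parallel+h_\perp$ with $h_\parallel\in\operatorname{span}\{h_1,\dots,h_n\}$ and $h_\perp$ orthogonal to every $h_i$. Then $\langle h,h_i\rangle_{\mathcal{H}}=\langle h_\parallel,h_i\rangle_{\mathcal{H}}$ leaves the squared-error sum in $\mathcal{Q}_o$ unchanged, while $\|h\|_{\mathcal{H}}^2=\|h_\parallel\|_{\mathcal{H}}^2+\|h_\perp\|_{\mathcal{H}}^2\geq\|h_\parallel\|_{\mathcal{H}}^2$. Since $\nu\geq 0$, setting $h_\perp=0$ can only decrease $\mathcal{Q}_o$, so the minimiser has the form $h_o=\sum_i\beta_{oi}h_i$.

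Next I would express $\mathcal{Q}_o$ in matrix form. With $G_{ij}=\langle h_i,h_j\rangle_{\mathcal{H}}$, one has $\|h\|_{\mathcal{H}}^2=\beta^\top G\beta$ and $\langle h,h_i\rangle_{\mathcal{H}}=(G\beta)_i$. Writing $N=\operatorname{diag}(k_{\mathcal{T}}(t_i,t_i))$ so that $\|t_i\|_{\mathcal{T}}=\sqrt{N_{ii}}$, and collecting the constants $w_i=k_{\mathcal{T}}(t_o,t_i)\|h_i\|_{\mathcal{H}}$, the sum of squares becomes $\|N^{1/2}G\beta-w\|^2$. Observing that $\rho_o=N^{1/2}w$, I would expand
\[
\mathcal{Q}_o(\beta)=\nu\,\beta^\top G\beta+\beta^\top GNG\beta-2\,\beta^\top G\rho_o+\|w\|^2.
\]
This is a convex quadratic in $\beta$, so its minimisers are exactly the solutions of the gradient equation
\[
(\nu G+GNG)\beta=G\rho_o.
\]

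Finally I would argue that the Moore--Penrose expression provides such a solution. The matrix $\nu G+GNG$ is symmetric positive semidefinite, hence its range equals the orthogonal complement of its kernel. A short computation using positive semidefiniteness shows that $\ker(\nu G+GNG)\subseteq\ker(G)$ (because $\beta^\top(\nu G+GNG)\beta=0$ forces $\beta^\top G\beta=0$ and therefore $G\beta=0$), so the range contains $\operatorname{range}(G)$, which in turn contains $G\rho_o$. Consistency of the normal equations then guarantees that $\beta_o=[\nu G+GNG]^\dagger G\rho_o$ is a valid solution, and by convexity any such solution yields the same optimal $h_o=\sum_i\beta_{oi}h_i$. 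I expect the main obstacle to be exactly this last step: verifying consistency so that the pseudo-inverse genuinely delivers a minimiser rather than a mere least-squares approximation; the rest is essentially bookkeeping once the correct substitutions are identified.
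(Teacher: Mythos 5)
Your proposal follows essentially the same route as the paper: an orthogonal decomposition to justify $h_o\in\operatorname{span}\{h_1,\dots,h_n\}$, rewriting $\mathcal{Q}_o$ as the quadratic $\nu\beta^\top G\beta+\beta^\top GNG\beta-2\beta^\top G\rho_o+\delta_o^\top\delta_o$, and setting the gradient to zero. Your additional verification that the normal equations are consistent (so the pseudo-inverse yields a genuine minimiser) goes beyond what the paper checks and is a welcome refinement, though note your kernel argument $\beta^\top(\nu G+GNG)\beta=0\Rightarrow\beta^\top G\beta=0$ uses $\nu>0$; for $\nu=0$ consistency instead follows from $G\rho_o=GN^{1/2}\delta_o\in\operatorname{range}(GN^{1/2})=\operatorname{range}(GNG)$.
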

The proof of Lemma \ref{lemma_solution_cp} and more theoretical details can be found in Appendix~\ref{app:theory}.

\section{Empirical Evaluation}
\label{sec:experiments}
We evaluate the proposed CP approach on an orphan screening task and compare it to state-of-the art baselines. 
For that, we use $9$ protein-ligand datasets extracted from BindingDB~(\url{bindingdb.org}). Each dataset corresponds to a human protein with peptidase domain and comprises between $240$ and $2649$ ligands with affinity labels (p$K_i$-values) towards the respective protein. We utilise the standard molecular fingerprint ECFP4 \cite{RogHahn10} for the representation of ligands. Target similarities were calculated from the amino acid sequence similarity of the peptidase domain. We normalise the similarities of all targets except for the respective orphan to sum to $1$.

The parameter $\nu$ of the optimisation problem (Eq.~\ref{solution_cp1}) was determined on an independent training set and fixed to $\nu=5$ for all orphan targets. To improve the numerical stability of inversion of $\nu G + GNG$, we add a constant $\lambda\in\mathbb{R}_+$ to the diagonal of the matrix. That is, we consider a slightly different CP optimisation
\[
\beta_o = [\nu G + \lambda \mathbf{I}_n + GNG]^\dagger G \rho_o\enspace ,
\]
with $\lambda=1$ for all orphan targets~\footnote{The code can be found at \url{bitbucket.org/grumpy_kat/corresponding-projections}.}.
%
%

In order to perform orphan screening, we perform a leave-one-out cross validation over all proteins, hence considering each protein as orphan once. We report the \textit{root mean squared error} (RMSE) of the predicted affinities. In order to test for stability of the  method, we create 10 distinct draws of the dataset by randomly sampling 240 ligands per protein. The reported RMSE is an average over the 10 draws per orphan protein. 
In accordance with state-of-the-art approaches we use the linear kernel for our experiments. The models for the supervised targets are trained using SVR. A 3-fold cross-validation is used to obtain the best hyperparameters for each target with parameter ranges $\epsilon\in\{0.1,0.01,0.001\}$ and regularisation parameter $C\in\{2^{-i} : i\in\{-5,-4,\dots,4,5\}\}$. More details on the experimental setup can be found in section~\ref{sec:AppendixExSe}.

Fig.~\ref{fig:rmse_linear_reduced} shows the RMSE of all approaches averaged over all orphan proteins and all draws. CP achieves a median RMSE of $2.197$. We compare CP with trivial and state-of-the-art baselines:

A naive way of combining target models without considering similarities of targets is to build a simple average over them and use the averaged model (Avg) to predict the ligand affinities for the orphan target. However, we find that Avg performs significantly worse than CP with a median RMSE of $3.610$. 

The most straight-forward way of using protein similarities is to choose models of other targets according to their similarity to the orphan target. To understand the range of how proximity of proteins affects the prediction quality we evaluate the performance of models from the targets closest to and farthest from the orphan. 
The model of the most similar target (Closest Protein) outperforms Avg, while its median RMSE of $2.427$ is significantly higher than the median RMSE of CP. Note that using the model of the least similar protein (Farthest Protein) yields a median RMSE of $3.203$, worse than CP but still better than Avg. 

The fact that Closest Protein performs better supports the intuition that the orphan and targets closer to it share similar traits which determine the affinities of ligands. Reducing Avg to a model using just the $3$ closest targets (Avg-Clo-3) yields a significant performance boost, indicating that it benefits from the focus on closer proteins. With a median RMSE of $2.260$ it outperforms Closest Protein.
\begin{wrapfigure}{r}{0.4\textwidth}
	\centering
	\includegraphics[width=5.5cm]{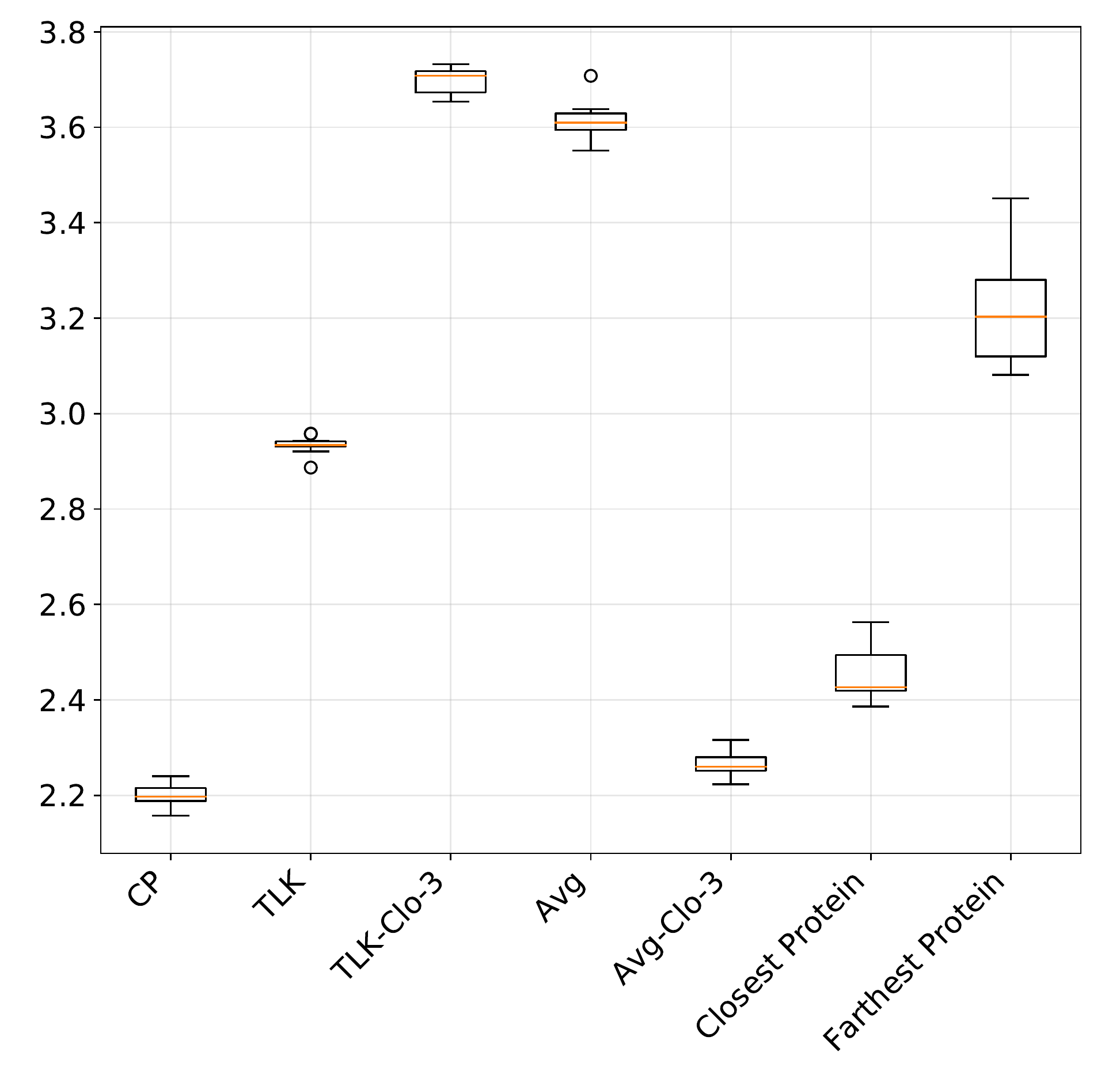}
	\caption{RMSEs of the proposed CP approach for all $9$ proteins and $10$ draws in comparison to the baselines.}
	\label{fig:rmse_linear_reduced}
\end{wrapfigure}
\begin{wrapfigure}{r}{0.4\textwidth}
	\centering
	\includegraphics[width=4cm]{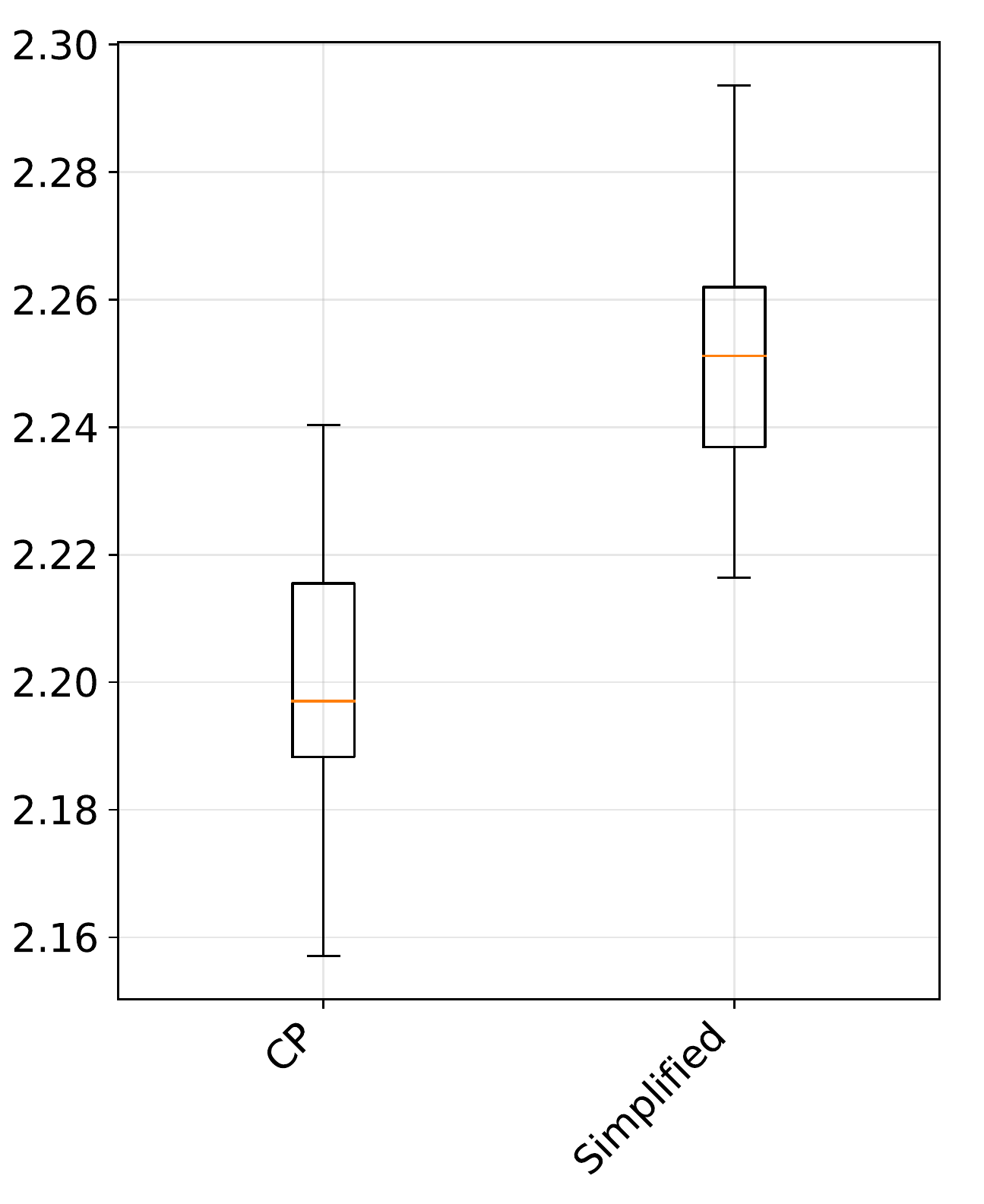}
	\caption{The RMSE of Simplified and CP averaged over all proteins and draws.}
	\label{fig:simplified_cp}
\vspace{-30px}
\end{wrapfigure}
Rather than completely omitting the targets which are not among the $3$ closest, the state-of-the-art approach TLK incorporates target similarities via features which are composed of joint target and ligand kernels. 
The hyperparameters for TLK are again optimised via grid search and 3-fold crossvalidation on all supervised targets using the same parameter ranges as above. TLK achieves a median RMSE of $2.934$, placing it between Closest and Farthest. The TLK-variant (TLK-Clo-3) which also uses just the 3 closest proteins suffers from reduced training data size and achieves a median RMSE of $3.708$, worse than the original TLK and even worse than Avg. 

Other than the previous approaches, CP optimizss the weights of each model to resemble the similarities of the targets, thereby making use of all available models. The results suggest that incorporating the similarities in the target and hypothesis space is beneficial: Not only is CP's median RMSE lower that that of simpler schemes like Avg-Clo-3, it also outperforms the state-of-the-art approach TLK.

The approach closest to CP is a simplified weighted average of models (for further details see Def.~\ref{def_scp} in Sec.~\ref{app:theory} of the appendix). Simplified outperforms the other baseline approaches, but Fig.~\ref{fig:simplified_cp} shows that it performs worse than CP, indicating that the optimisation step of CP yields a better model than the plain integration of the similarities.

\section{Conclusion and Future Work}
\label{sec:conclusion}
We introduced corresponding projections, a transfer learning based approach to hypothesis finding for unlabelled target domains that incorporates target and hypothesis similarities to derive a model for an orphan target. We applied the algorithm to the problem of target-ligand affinity prediction and empirically demonstrated its superior performance over the current state-of-the-art approach of using support vector regression with target-ligand-kernels.

For future work we will expand our evaluation of corresponding projections to further tasks, such as domain adaptation in natural language processing. We will also investigate how the proximity of targets affects the quality of predictions.

\subsubsection*{Acknowledgments}

We want to thank Stefan R\"uping and Stefan Wrobel for their valuable input and fruitful discussions. This research has been funded by the German Federal Ministry of Education and Research, Foerderkennzeichen 01S18038B.

\newpage
\bibliographystyle{plainnat}
\bibliography{bibliography}

\newpage
\newpage
\appendix
\label{sec:appendix}

\section{Appendix}

\subsection{Corresponding Projections Theory}
\label{app:theory}

In this theoretical part of the appendix we present variants of the CP optimisation as defined in Def. \ref{def_cp} as well as their corresponding solutions. We start with a linear version of corresponding projections, then we derive a simplified version and lastly we introduce the more general non-linear corresponding projection which we evaluated in this paper.

\textbf{Linear and Simplified Algorithm}

At first we turn towards the case $\mathcal{H} = \mathbb{R}^d$ with the canonical inner product, i.e., we consider linear functions of $d$-dimensional input vectors. We denote this CP version \textit{linear corresponding projections} (LCP). As a preparation for the following result we define the matrices $H \in \mathbb{R}^{d \times n}$ and $N \in \mathbb{R}^{n \times n}$ via 
\begin{align}
H = (h_1 | \cdots | h_n) ,\quad N = \operatorname{diag}(k_{\mathcal{T}}(t_i,t_i)), \label{def_mat_H_N}
\end{align}
as well as the vectors $\rho_o, \delta_o \in \mathbb{R}^n$ with
\begin{align}
\{\delta_o\}_i= k_{\mathcal{T}}(t_o,t_i) \|h_i\|_d,\quad \{\rho_o\}_i = \sqrt{k_{\mathcal{T}}(t_i,t_i)}\{\delta_{o}\}_i,\label{def_mat_rho_delta}
\end{align}
where $\|\cdot\|_d$ is the Euclidean norm in $\mathbb{R}^d$.
\begin{lm}
Let $(t_1,h_1), \ldots, (t_n, h_n) \in \mathcal{T} \times \mathcal{H}$ be examples of targets and corresponding hypotheses. If $\mathcal{H} = \mathbb{R}^d$ and $k_{\mathcal{T}}$ a similarity measure, LCP can be solved as follows
\begin{align}
f(t_o) &= \left[ \nu \mathbf{I}_d + \sum\limits_{i=1}^n h_i k_{\mathcal{T}}(t_i,t_i)h_i^T \right]^{\dagger}\label{solution_lcp}\\ 
&\quad\quad\quad\cdot\sum\limits_{i=1}^n h_i \|h_i\|_d \sqrt{k_{\mathcal{T}}(t_i,t_i)} k_{\mathcal{T}}(t_o,t_i),\nonumber
\end{align}
where $^{\dagger}$ is the \textit{Moore-Penrose inverse} of a square matrix. 
\end{lm}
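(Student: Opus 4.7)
The plan is to specialise Def.~\ref{def_cp} to $\mathcal{H}=\mathbb{R}^d$, recast $\mathcal{Q}_o$ as a standard Tikhonov-regularised least squares problem in $h\in\mathbb{R}^d$, and then read off (\ref{solution_lcp}) from its normal equations. In the linear setting the sought hypothesis $h_o = f(t_o)$ is itself a vector in $\mathbb{R}^d$, the $\mathcal{H}$ inner products reduce to $h^T h_i$, and the target quantities become $\|t_i\|_{\mathcal{T}} = \sqrt{k_{\mathcal{T}}(t_i,t_i)}$ and $\langle t_o,t_i\rangle_{\mathcal{T}} = k_{\mathcal{T}}(t_o,t_i)$, so each summand inside $\mathcal{Q}_o$ becomes $(h^T h_i\sqrt{k_{\mathcal{T}}(t_i,t_i)} - k_{\mathcal{T}}(t_o,t_i)\|h_i\|_d)^2$.

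Next I would stack the $n$ residuals into a single vector. With $H$ and $N$ from (\ref{def_mat_H_N}), $\delta_o$ from (\ref{def_mat_rho_delta}), and $N^{1/2} = \operatorname{diag}(\sqrt{k_{\mathcal{T}}(t_i,t_i)})$, the $i$-th residual is exactly the $i$-th coordinate of $N^{1/2}H^T h - \delta_o$. Hence
$$\mathcal{Q}_o(h) \;=\; \nu\|h\|_d^2 + \|N^{1/2}H^T h - \delta_o\|_d^2,$$
which is convex and quadratic in $h$.

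Then I would differentiate and set the gradient to zero. The condition $\nabla_h\mathcal{Q}_o(h)=0$ reads $2\nu h + 2HN^{1/2}(N^{1/2}H^T h - \delta_o) = 0$, and using $N^{1/2}\delta_o = \rho_o$ (which is precisely the definition of $\rho_o$ in (\ref{def_mat_rho_delta})) it collapses to the normal equation
$$(\nu\mathbf{I}_d + HNH^T)\,h \;=\; H\rho_o.$$
Expanding via $HNH^T = \sum_{i=1}^n h_i k_{\mathcal{T}}(t_i,t_i)h_i^T$ and $H\rho_o = \sum_{i=1}^n h_i\|h_i\|_d\sqrt{k_{\mathcal{T}}(t_i,t_i)}k_{\mathcal{T}}(t_o,t_i)$ yields precisely (\ref{solution_lcp}) upon inversion.

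The one point requiring care is the use of the Moore-Penrose inverse. For $\nu>0$ the matrix $\nu\mathbf{I}_d + HNH^T$ is strictly positive definite, so the pseudoinverse agrees with the ordinary inverse and the minimiser is unique. For $\nu=0$ the matrix can be rank-deficient, but because $N$ is diagonal with strictly positive entries, $\operatorname{range}(HNH^T) = \operatorname{range}(HN^{1/2}) = \operatorname{range}(H)$, so $H\rho_o$ lies in $\operatorname{range}(HNH^T)$ and the normal equation remains consistent; then $[HNH^T]^{\dagger}H\rho_o$ selects the unique minimum-norm minimiser of $\mathcal{Q}_o$, which matches the formula. This invertibility bookkeeping is the only nontrivial step; everything else is direct linear algebra.
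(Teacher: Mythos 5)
Your proposal is correct and follows essentially the same route as the paper: expand $\mathcal{Q}_o$ as a quadratic in $h$ using $H$, $N$, $\rho_o$, $\delta_o$, set the gradient to zero, and read off the normal equation $(\nu\mathbf{I}_d + HNH^T)h = H\rho_o$. Your additional justification of the Moore--Penrose inverse in the $\nu=0$ case (consistency of the normal equation via $\operatorname{range}(HNH^T)=\operatorname{range}(H)$ and selection of the minimum-norm minimiser) is a welcome refinement that the paper omits.
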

\begin{proof}
We formulate the objective $\mathcal{Q}_o(h)$ in Def. \ref{def_cp} with $H, N, \rho_o$, and $\delta_o$ from above.
\begin{align}
\mathcal{Q}_o(h) = \nu h^Th + h^THNH^Th - 2h^TH\rho_o + {\delta_o}^T\delta_o \nonumber
\end{align}
The solution of LCP in (\ref{solution_lcp}) can be derived by setting the gradient of $\mathcal{Q}_o(h)$
\begin{align}
\frac{\partial \mathcal{Q}_o}{\partial h} = 2\nu h + 2HNH^Th - 2H\rho_o\nonumber
\end{align}
equal to zero. We obtain $h_o = [\nu \mathbf{I}_d + HNH^T]^\dagger H \rho_o$.
\end{proof}
As the matrix $HNH$ from above is positive semi-definite, the inverse $[\nu\mathbf{I}_d + HNH]^{-1}$ always exists if $\nu$ is positive. Otherwise, the more general $[\cdot]^{\dagger}$ will be applied. 

Subsequent to the linear version LCP we define a simplified CP variant.
\begin{defin}
Let $(t_1,h_1), \ldots, (t_n,h_n) \in \mathcal{T} \times \mathcal{H}$ be supervised targets and their hypotheses. For an arbitrary similarity function $k_{\mathcal{T}}$ on targets we define \textit{simplified corresponding projections} (SCP) according
\begin{align}
f(t_o) = \sum\limits_{i=1}^n h_i \frac{k_{\mathcal{T}}(t_o,t_i)}{\sqrt{k_{\mathcal{T}}(t_i,t_i)}}. \label{def_scp}
\end{align}
\end{defin}
In the na\"ive SCP approach the orphan hypothesis $h_o$ is a linear combination of supervised hypotheses $h_i$ with coefficients that have not to be learned beforehand. The coefficients in (\ref{def_scp}) are essentially the left hand side of the CP initial equation in (\ref{sims_equality}). Therefore, the complexity of SCP is only $\mathcal{O}(|T|d\kappa)$ if the cost for $k_{\mathcal{T}}$ is bounded by $\kappa$. In contrast, the complexity for the calculation of (\ref{solution_lcp}) is $\mathcal{O}(|T|d^2\kappa)$. Actually, for SCP the candidate space $\mathcal{H}$ is not necessarily equal to $\mathbb{R}^d$, but an arbitrary function space. A similar approach to SCP for classification already appeard in \cite{GepHumStuGaeBaj09}, where the authors also applied a weighted sum of predictors denoted as \textit{SVM linear combination} (SVM-LC).

\textbf{Non-Linear Corresponding Projections}

In the last section we considered a linear as well as a simplified version of CP. Now we want to exploit that $\mathcal{H}$ is a Hilbert space with general inner product $\langle \cdot,\cdot \rangle$ and corresponding norm $\|\cdot\|$. In this scenario we conclude a representation of $h_o$ as linear combination
\begin{align}
h_o = \sum\limits_{i=1}^n \beta_{oi} h_i \quad,\quad \beta_o \in \mathbb{R}^n, \label{linear_comb_ho}
\end{align}
i.e, $h_o$ lies in the $\operatorname{span}$ of the supervised hypotheses $h_1, \ldots, h_n$. This can be shown with an argumentation similar to the proof of the \textit{representer theorem} (RT) \cite{SchoeHerSmoWill01}. To this aim, let us consider the decomposition $h_o = s + g$, where $s \in span\{ h_1, \ldots, h_n\}$ and $\langle g, s' \rangle_{\mathcal{H}} = 0$ for all $s' \in span\{ h_1, \ldots, h_n\}$. Then we obtain
\begin{align}
&= \nu \|s+g\|^2 +\sum\limits_{i=1}^n [ \langle s + g, h_i\rangle \sqrt{k_{\mathcal{T}}(t_i,t_i)} \nonumber\\
&\quad\quad\quad\quad\quad\quad\quad\quad- k_{\mathcal{T}}(t_o,t_i) \|h_i\|~]^2 \nonumber\\
&\geq \nu \|s\|^2 +\sum\limits_{i=1}^n [ \langle s, h_i\rangle\sqrt{k_{\mathcal{T}}(t_i,t_i)} - k_{\mathcal{T}}(t_o,t_i) \|h_i\|~]^2, \nonumber
\end{align}
which shows the claim. Analogous to (\ref{def_mat_H_N}) and (\ref{def_mat_rho_delta}), we consider matrices $G, N \in \mathbb{R}^{n \times n}$ with general inner product
\begin{align}
\{G\}_{i,j} = \langle h_i, h_j \rangle_{\mathcal{H}}, \quad  N = \operatorname{diag}(k_{\mathcal{T}}(t_i,t_i))\label{def_mat_H_N_2}
\end{align} 
and vectors $\rho_o, \delta_o \in \mathbb{R}^n$
\begin{align}
\{\delta_o\}_i= k_{\mathcal{T}}(t_o,t_i)\|h_i\|_{\mathcal{H}},\quad \{\rho_o\}_i = \sqrt{k_{\mathcal{T}}(t_i,t_i)} ~\{\delta_{o}\}_i. \label{def_mat_rho_delta_2}
\end{align}
We identify $h_o$ with its defining vector $\beta_o \in \mathbb{R}^n$.
\begin{lm}
Let $\mathcal{H}$ be a Hilbert space and $k_{\mathcal{T}}$ a similarity measure on the target space $\mathcal{T}$. With the representation of $h_o$ in (\ref{linear_comb_ho}) CP from Def. \ref{def_cp} can be solved as
\begin{align}
\beta_o = [\nu G + GNG]^\dagger G \rho_o\enspace ,
\label{solution_cp1}
\end{align}
where $N$, $G$, and $\rho_o$ are defined as in (\ref{def_mat_H_N_2}) and (\ref{def_mat_rho_delta_2}). 
\end{lm}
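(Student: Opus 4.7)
\bigskip

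\noindent\textbf{Proof plan.}

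The plan is to reduce the infinite-dimensional optimisation over $h\in\mathcal H$ to a finite-dimensional quadratic programme in the coefficient vector $\beta_o\in\mathbb R^n$, and then read off the normal equations. The representer-style argument given just above the lemma already justifies restricting to $h=\sum_{j=1}^n\beta_j h_j$, so what remains is a clean algebraic computation.

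First I would rewrite the two ingredients of $\mathcal Q_o$ in terms of $\beta$ via the Gram matrix $G=\{\langle h_i,h_j\rangle_{\mathcal H}\}_{ij}$. Using $\|t_i\|_{\mathcal T}=\sqrt{k_{\mathcal T}(t_i,t_i)}$ and $\langle t_o,t_i\rangle_{\mathcal T}=k_{\mathcal T}(t_o,t_i)$, the squared norm becomes $\|h\|_{\mathcal H}^2=\beta^{\!\top}G\beta$, and the inner products $\langle h,h_i\rangle_{\mathcal H}$ are the entries of $G\beta$. Writing $N^{1/2}=\operatorname{diag}(\sqrt{k_{\mathcal T}(t_i,t_i)})$ and using $\rho_o=N^{1/2}\delta_o$, the sum of squared residuals collapses into
\begin{equation*}
\sum_{i=1}^n\bigl|\langle h,h_i\rangle_{\mathcal H}\|t_i\|_{\mathcal T}-\langle t_o,t_i\rangle_{\mathcal T}\|h_i\|_{\mathcal H}\bigr|^2
=\bigl\|N^{1/2}G\beta-\delta_o\bigr\|^2
=\beta^{\!\top}GNG\beta-2\beta^{\!\top}G\rho_o+\delta_o^{\!\top}\delta_o.
\end{equation*}
Hence $\mathcal Q_o(h)=\nu\beta^{\!\top}G\beta+\beta^{\!\top}GNG\beta-2\beta^{\!\top}G\rho_o+\delta_o^{\!\top}\delta_o$.

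Next I would differentiate this convex quadratic with respect to $\beta$ and set the gradient to zero, obtaining the normal equation $[\nu G+GNG]\beta_o=G\rho_o$. Since $N$ is diagonal with non-negative entries and $G$ is positive semi-definite, $\nu G+GNG$ is positive semi-definite, so the equation is consistent; applying the Moore--Penrose inverse yields the stated formula $\beta_o=[\nu G+GNG]^\dagger G\rho_o$. Exactly as in the LCP proof given earlier, whenever $\nu G+GNG$ is non-singular this reduces to an ordinary inverse, and otherwise the pseudoinverse picks out the minimum-norm $\beta_o$ among all minimisers.

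The only delicate point -- and what I would flag as the main obstacle -- is well-posedness when the $h_i$ are linearly dependent: then $\beta_o$ is not unique, but the induced element $h_o=\sum_i\beta_{oi}h_i\in\mathcal H$ is, because $\mathcal Q_o$ depends on $\beta$ only through $G\beta$ and $\beta^{\!\top}G\beta$. I would verify this by noting that any $\beta$ in the kernel of $G$ contributes zero to both terms, so different pseudoinverse-consistent choices of $\beta_o$ yield the same $h_o$. With that observation the representer reduction from the text and the quadratic calculation above combine to give the lemma.
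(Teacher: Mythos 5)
Your proposal is correct and follows essentially the same route as the paper: substitute $h=\sum_i\beta_ih_i$, express $\mathcal{Q}_o$ as the quadratic $\nu\beta^{T}G\beta+\beta^{T}GNG\beta-2\beta^{T}G\rho_o+\delta_o^{T}\delta_o$, and set the gradient to zero to obtain $\beta_o=[\nu G+GNG]^{\dagger}G\rho_o$; your extra remarks on uniqueness of $h_o$ modulo $\ker G$ are a welcome addition the paper omits. One small caveat: positive semi-definiteness of $\nu G+GNG$ alone does not imply consistency of the normal equations; consistency holds here because $G\rho_o=GN^{1/2}\delta_o$ lies in $\operatorname{range}(GN^{1/2})\subseteq\operatorname{range}(\nu G+GNG)$, which is worth stating explicitly.
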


\begin{proof}
With (\ref{linear_comb_ho}), (\ref{def_mat_H_N_2}), and (\ref{def_mat_rho_delta_2}) the objective in Def. \ref{def_cp} can be written
\begin{align}
 \mathcal{Q}_o(\beta) = \nu \beta^T G\beta + \beta^TGNG \beta -2\beta^T G \rho_o + (\delta_o)^T \delta_o .\nonumber 
\end{align}
Its gradient $\partial \mathcal{Q}_o/\partial \beta$ set to zero shows $\beta_o = [\nu G + GNG]^\dagger G \rho_o$.
\end{proof}
This constitutes the approach used throughout the paper. Solving Eq.~\ref{solution_cp1} requires inverting an $n \times n$ matrix and thus has a computational complexity of $\mathcal{O}(n^3)$. 

Note that in practice it can happen that the the union of ligands for all supervised targets $q$ is smaller than the number supervised targets $n$. This can happen, e.g., if every supervised target has the same small training set of ligands and $n$ is larger than this training set. For this case, the CP solution according to Eq.~\ref{solution_cp1} can be rewritten such that it can be solved in time $\mathcal{O}(q^3)$. 
For that we require that $\mathcal{H}$ is a \textit{reproducing kernel Hilbert space} (RKHS)
\begin{align}
\mathcal{H} = \left\{h(\cdot) = \sum\limits_{i=1}^\infty \pi_i k_{\mathcal{H}} (x_i,\cdot) : x_i \in \mathcal{X}, \alpha_i \in \mathbb{R} \right\}, \nonumber
\end{align}
where $k_{\mathcal{H}}$ defined on $\mathcal{X} \times \mathcal{X}$ is the reproducing kernel of $\mathcal{H}$ (for more details see, e.g., \cite{ShaweTayChris04, SchoeHerSmoWill01}). Assume now that each hypothesis $h_i$ arised from a training process with training examples from $\mathcal{X} \times \mathcal{Y}$ solving a regularised cost function like the one applied for \textit{regularised empirical risk minimisation}. Actually, the set of training instances and according labels depends on the respective \textit{supervised target} $t_i$. Let $\{x_1,\ldots,x_q \}$ be the union of the training instances for all targets $t_i$, $i=1,\ldots,n$, and $K$ the Gram matrix of kernel $k_{\mathcal{H}}$. The parameterised representation of each hypothesis $h_i$
\begin{align}
h_i(x) = \sum\limits_{j=1}^q \pi_{ij} k_{\mathcal{H}}(x_j,x) \quad,\quad x \in \mathcal{X}, \pi_i \in \mathbb{R}^q,\label{repr_hi}
\end{align}
exists according to the RT. If $x_j$ was not in the set of the original training instances of $t_i$ the parameter $\pi_{ij}$ is just equal to zero. We cannot apply the RT for the orphan target $t_o$ and its hypothesis $h_o$ directly because of the lack of training examples. However, $h_o$ can be represented equivalently with coefficients $\pi_o \in \mathbb{R}^q$ as we have the representation in (\ref{linear_comb_ho})
\begin{align}
h_o(x) &= \sum\limits_{i=1}^n \beta_{oi} h_i(x) = \sum\limits_{i=1}^n \beta_{oi} \left(\sum\limits_{j=1}^q \pi_{ij} k_{\mathcal{H}}(x_j,x)\right) \label{repr_ho}\\
&= \sum\limits_{j=1}^q \left(\sum\limits_{i=1}^n \beta_{oi} \pi_{ij}\right) k_{\mathcal{H}}(x_j,x) = \sum\limits_{j=1}^q \pi_{oj} k_{\mathcal{H}}(x_j,x),\nonumber
\end{align}
where $\beta_o \in \mathbb{R}^n$ and $\pi_i,\pi_o \in \mathbb{R}^q$. Hence, with $\Pi = (\pi_1 | \cdots | \pi_n)$ the coefficients of the orphan target are $\pi_o = \Pi \beta_o$. Analogous to the CP solution in (\ref{solution_cp1}) we define the matrices $\tilde{G} \in \mathbb{R}^{q \times n}$ and $N \in \mathbb{R}^{n \times n}$
\begin{align}
\tilde{G} = K\Pi,\quad N = \operatorname{diag}(k_{\mathcal{T}}(t_i,t_i)) \label{def_mat_H_N_3}
\end{align}
and vectors $\tilde{\rho}_o, \tilde{\delta}_o \in \mathbb{R}^n$ 
\begin{align}
\{\tilde{\delta_o}\}_i = k_{\mathcal{T}}(t_o,t_i) \sqrt{\pi_i^T K\pi_i},\quad \{\tilde{\rho}_o\}_i = \sqrt{k_{\mathcal{T}}(t_i,t_i)} ~\{\tilde{\delta}_{o}\}_i\label{def_mat_rho_delta_3}
\end{align}
Again, we identify $h_o$ with its vector of coefficients $\pi_o$. 
\begin{lm}
Let $h_i$, $i = 1,\ldots,n$, and $h_o$ have the representations (\ref{repr_hi}) and (\ref{repr_ho}) from above with $\pi_i, \pi_o \in \mathbb{R}^{q}$. With $k_{\mathcal{H}}$ and $k_{\mathcal{T}}$ we denote the reproducing kernel of $\mathcal{H}$ and a similarity measure on the target space $\mathcal{T}$, as well as $K$ be the Gram matrix with respect to $x_1,\ldots,x_q$. With (\ref{repr_hi}) and (\ref{repr_ho}) CP can be solved as
\begin{align}
\pi_o &= \left[ \nu K + \sum\limits_{i=1}^n K \pi_i k_{\mathcal{T}}(t_i,t_i) \pi_i^T K \right]^{\dagger} \nonumber\\ 
&\quad\quad\cdot\sum\limits_{i=1}^n \left( \sqrt{\pi_i^T K \pi_i} \sqrt{k_{\mathcal{T}}(t_i,t_i)} k_{\mathcal{T}}(t_o, t_i)\right) K \pi_i, \label{solution_cp2}
\end{align}
where $\nu \geq 0$ and $t_i$, $i=1,\ldots,n$, are the supervised targets. We call this approach kernel corresponding projections (KCP).
\end{lm}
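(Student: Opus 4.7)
The plan is to re-parameterise the CP objective $\mathcal{Q}_o$ from Def. \ref{def_cp} directly in terms of the kernel coefficient vector $\pi_o \in \mathbb{R}^q$ of $h_o$ in the RKHS basis $k_{\mathcal{H}}(x_1,\cdot), \ldots, k_{\mathcal{H}}(x_q,\cdot)$, rather than going through the model-basis coefficients $\beta_o \in \mathbb{R}^n$ as in the previous lemma. The gain is that $K$ is $q \times q$, so the resulting linear system is $q$-dimensional. The strategy is then the same first-order convex-quadratic calculation used for LCP and for (\ref{solution_cp1}), but carried out at the level of $\pi_o$.

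First, I would use the reproducing property of $k_{\mathcal{H}}$ applied to (\ref{repr_hi}) and (\ref{repr_ho}) to rewrite every RKHS quantity as a bilinear form in $\pi_o,\pi_i$ and $K$: namely $\|h_o\|_{\mathcal{H}}^2 = \pi_o^T K \pi_o$, $\langle h_o, h_i \rangle_{\mathcal{H}} = \pi_o^T K \pi_i$, and $\|h_i\|_{\mathcal{H}} = \sqrt{\pi_i^T K \pi_i}$. Together with $\|t_i\|_{\mathcal{T}} = \sqrt{k_{\mathcal{T}}(t_i,t_i)}$ and $\langle t_o,t_i\rangle_{\mathcal{T}} = k_{\mathcal{T}}(t_o,t_i)$, substitution into (\ref{def_corr_proj}) yields
\begin{align*}
\mathcal{Q}_o(\pi_o) = \nu\, \pi_o^T K \pi_o + \sum_{i=1}^n \left[\, \pi_o^T K \pi_i \sqrt{k_{\mathcal{T}}(t_i,t_i)} - k_{\mathcal{T}}(t_o,t_i)\sqrt{\pi_i^T K \pi_i}\, \right]^2 .
\end{align*}

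Second, because $K$ is PSD this is a convex quadratic in $\pi_o$, so the first-order condition is sufficient for optimality. I would differentiate and set the gradient to zero to obtain the normal equation
\begin{align*}
\left[\, \nu K + \sum_{i=1}^n K\pi_i\, k_{\mathcal{T}}(t_i,t_i)\, \pi_i^T K \,\right] \pi_o \;=\; \sum_{i=1}^n \sqrt{\pi_i^T K \pi_i}\, \sqrt{k_{\mathcal{T}}(t_i,t_i)}\, k_{\mathcal{T}}(t_o,t_i)\, K\pi_i ,
\end{align*}
from which (\ref{solution_cp2}) follows by left-multiplication with the Moore--Penrose inverse of the bracketed matrix.

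The main subtlety is justifying the use of $[\cdot]^{\dagger}$: the left-hand side is $\nu K + \tilde{G} N \tilde{G}^T$ with $\tilde{G} = K\Pi$ as defined in (\ref{def_mat_H_N_3}), and this can be singular precisely in the regime the lemma targets ($q < n$, or $K$ itself rank-deficient, or $\nu=0$). Consistency of the normal equation is not an issue because it is the stationarity condition of a convex quadratic that is bounded below, so a minimiser exists and the pseudo-inverse returns the minimum-norm one. The ambiguity among minimisers, if any, lies in $\ker(\nu K + \tilde{G} N \tilde{G}^T)$; any $v$ in this kernel satisfies $v^T K v = 0$ and $v^T \tilde{G} N \tilde{G}^T v = 0$, hence $\pi_o + v$ defines the same element of $\mathcal{H}$ (its RKHS norm contribution vanishes and it acts identically on the span of the $k_{\mathcal{H}}(x_j,\cdot)$), so the choice is immaterial for the induced hypothesis $h_o$.
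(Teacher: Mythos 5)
Your proposal follows essentially the same route as the paper: substitute the kernel expansions into $\mathcal{Q}_o$ to obtain the quadratic $\nu \pi^T K \pi + \pi^T \tilde{G} N \tilde{G}^T \pi - 2\pi^T \tilde{G}\tilde{\rho}_o + \tilde{\delta}_o^T\tilde{\delta}_o$ and set its gradient to zero. Your additional remarks on convexity and on why the Moore--Penrose inverse and the kernel ambiguity are harmless are correct and in fact more careful than the paper's one-line proof.
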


\begin{proof}
The proof is again a consequence of the stationarity of the gradient of the parameterised objective
\begin{align}
\mathcal{Q}_o(\pi) = \nu \pi^T K \pi + \pi^T \tilde{G} N \tilde{G}^T \pi - 2 \pi^T  \tilde{G} \tilde{\rho_o} + \tilde{\delta_o}^T \tilde{\delta_o}\nonumber
\end{align}
from Def. (\ref{def_cp}).
\end{proof}
Solving Eq.~\ref{solution_cp2} requires inverting a $q\times q$ matrix and thus has a computational complexity in $\mathcal{O}(q^3)$. As mentioned above, this is preferable to solving Eq.~\ref{solution_cp1} for $n\gg q$.

\subsection{Extended Experimental Results}
\label{sec:AppendixExSe}
In this section we provide information on the results with supervised baselines.

We also evaluated a hypothetical supervised case in which we trained an SVR on several fractions ($5\%,10\%,30\%,50\%,$ and $80\%$) of the available ligands, testing it on remaining ones. 
\begin{table}[]
\centering
\begin{tabular}{c|c}
\textbf{Method} & \textbf{Median RMSE} \\ \hline
\\
CP              & 2.197                \\
Supervised-5\%  & 1.797                \\
Supervised-10\% & 1.440                \\
Supervised-30\% & 1.140                \\
Supervised-50\% & 1.038                \\
Supervised-80\% & 0.946               
\end{tabular}
\caption{Median RMSE of CP and the supervised approaches over all draws, averaged over all proteins.}
\end{table}
This yields an upper bound for the RMSE. On average all supervised models clearly outperform CP. However, they solve a different task, since CP assumes no labelled data for the orphan target. Notably, the performance of CP is a lot closer to the supervised approaches, than TLK. For Supervised 50\% the median RMSE of CP is larger than Supervised 50\% by a factor of $2$, while TLKs median RMSE is larger by a factor of almost $3$. 
\begin{figure}
	\centering
	\subfigure[]{\includegraphics[width=6.5cm]{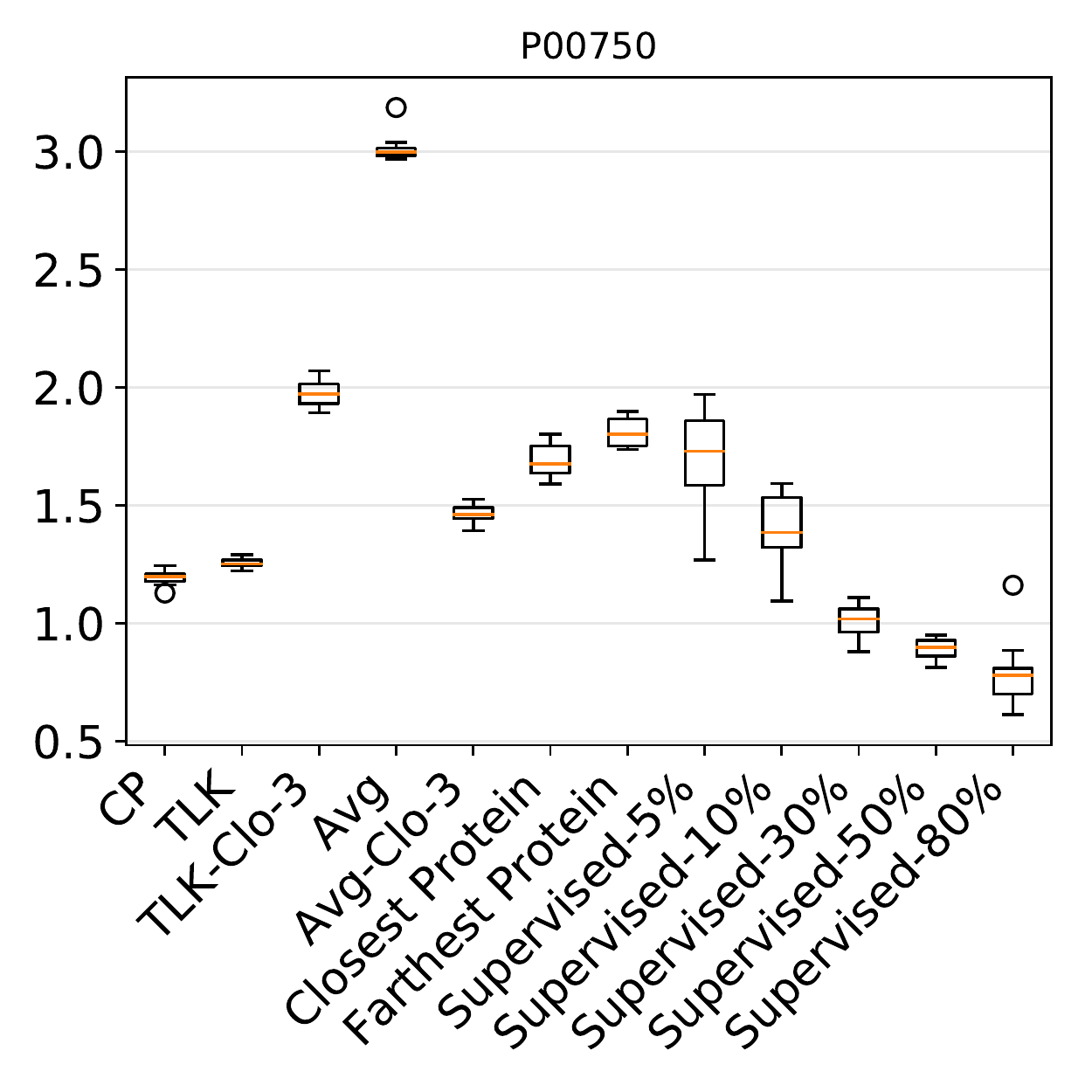}\label{fig:P00750}}
	\subfigure[]{\includegraphics[width=6.5cm]{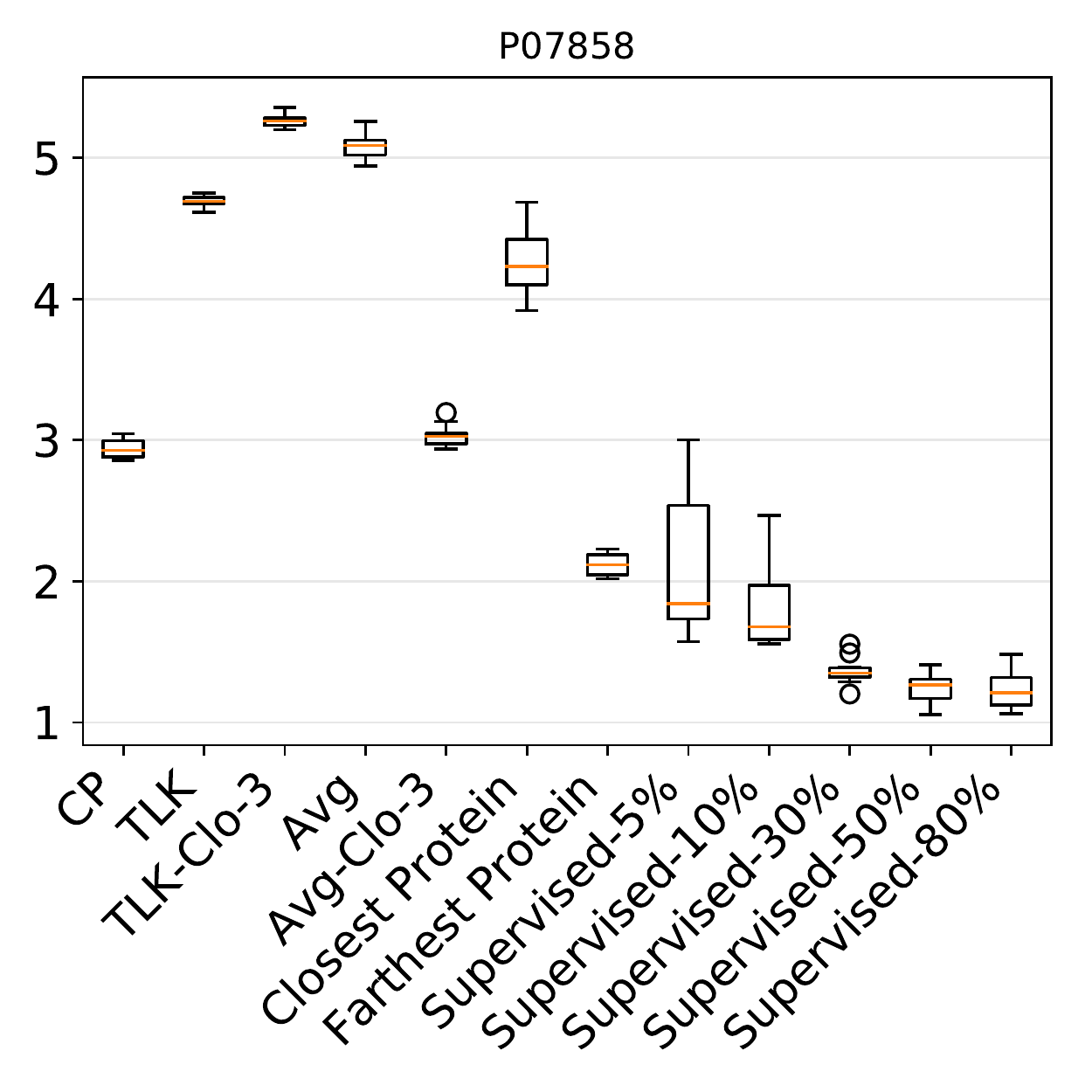}\label{fig:P07858}}
	\caption{The RMSE of the unsupervised and supervised approaches over all draws for two proteins protein \textit{P00750} and \textit{P07858}.}
	\label{fig:rmsePerProtein}
\end{figure}
A more detailed investigation of the results, shows that CPs performance varies strongly depending on the orphan target. Fig.~\ref{fig:rmsePerProtein}\subref{fig:P00750} shows the performance for the protein named \textit{P00750}. We can observe that CP outperforms both the supervised model trained on $5\%$ and on $10\%$ of the data, reaching a performance close to a supervised model trained on $30\%$ of the data. Contrary Fig.~\ref{fig:rmsePerProtein}\subref{fig:P07858} shows that for protein \textit{P07858} CP is worse than all supervised approaches and the model of the farthest protein performs best out of all unsupervised approaches, nearly as good as the supervised model trained on $10\%$ of the data. In total CP performs better than Supervised-5\% for 4 proteins, better than Supervised-10\% for 2 proteins, comparable to Supervised-5\% for 1 protein and significantly worse than all supervised approaches for 4 proteins. 

This raises open research questions left for future work. One question is whether the chosen similarity measure for target similarities is suitable or whether other similarity measures would perform better. Another one is whether the performance of CP increases with the number of ligands per protein or the total number of proteins. Since the performance of CP varies between proteins, another question is whether this performance can be related to the similarity of the orphan target and supervised ones.
%
%

\end{document}